\definecolor{mydarkblue}{rgb}{0,0.08,0.45}
\definecolor{myfavblue}{rgb}{0.1176, 0.392, 1.0}
\newtheorem{lemma}{Lemma}[section]
\newtheorem{theorem}{Theorem}[section]
\newtheorem{definition}{Definition}[section]
\newtheorem{proposition}{Proposition}[section]
\newtheorem{corollary}{Corollary}[section]
\title{Universality Theorems for Generative Models}
\author{
  Valentin Khrulkov \\
  Skolkovo Institute of \\
  Science and Technology\\
  Moscow, Russia\\
  \texttt{valentin.khrulkov@skoltech.ru} \\
  \And
  Ivan Oseledets  \\
  Skolkovo Institute \\
  of Science and Technology\\
  Moscow, Russia\\
  Institute of Numerical Mathematics,\\
  Russian Academy of Sciences \\
  Moscow, Russia \\
  \texttt{i.oseledets@skoltech.ru} 
}
\begin{document}

\maketitle

\begin{abstract}
Despite the fact that generative models are extremely successful in practice, the theory underlying this phenomenon is only starting to catch up with practice. In this work we address the question of the \emph{universality} of generative models: is it true that neural networks can approximate any data manifold arbitrarily well? We provide a positive answer to this question and show that under mild assumptions on the activation function one can always find a feedforward neural network which maps the latent space onto a set located within the specified Hausdorff distance from the desired data manifold. We also prove similar theorems for the case of multiclass generative models and cycle generative models, trained to map samples from one manifold to another and vice versa.
\end{abstract}
\section{Introduction}
Generative models such as Generative Adversarial Networks (GANs) are widely used for tasks such as image synthesis, semi-supervised learning, and domain adaptation \citep{brock2018large,radford2015unsupervised,zhang2017stackgan,isola2017image}. Such generative models are trained to perform a mapping from a latent space of a small dimension to some specified data manifold, typically represented by a dataset of natural images. Despite their success and excellent performance, the theory behind such models is not yet well understood. A recent survey of open questions about generative models \citep{odena2019open} among others presents the following question: what sorts of distributions can GANs model? In particular, what does it even mean for a GAN to \emph{model a distribution}? 

To answer these questions we adopt the following geometric approach, very amenable to precise mathematical analysis.  Under the assumption of the \emph{Manifold Hypothesis} \citep{goodfellow2016deep}, data comes from a certain data manifold. Then the goal of a generator network is to reproduce this data manifold as closely as possible by mapping the latent space into the ambient space of the data manifold. This intuitive understanding can be written in a more concrete manner as follows.
Suppose that we are given the latent space $\mathcal{M}_z$, feedforward neural network $f_{\theta}$ as a generator, and some target data manifold $\mathcal{M}$. In order for the manifold $\mathcal{M}$ to be generated by $f_{\theta}$ we require that the image of $\mathcal{M}_z$ under $f_{\theta}$ is sufficiently close to $\mathcal{M}$, more specifically that the Hausdorff distance between $f_{\theta}(\mathcal{M}_z)$ and $\mathcal{M}$ is less than the given parameter $\varepsilon$. Hausdorff distance is a well-defined metric on the space of all compact subsets of Euclidean space and hence is equal to zero if and only if $f_{\theta}(\mathcal{M}_z) = \mathcal{M}$ --- the case of precise replication of the data manifold. Thus, the question at hand can be formulated as follows: is it possible to approximate in the sense of the Hausdorff distance an arbitrary compact (connected) manifold using standard feedforward neural networks? By combining techniques from Riemannian geometry with well--known properties of neural networks we provide a positive answer to this question. We also show that the condition of being \emph{smooth} is not necessary and the results are also valid for just topological manifolds. 

We further extend the discussed geometric approach for the theoretical analysis of many practical situations, for instance, to the case of data manifolds, which consist of multiple disjoint manifolds and correspond to multiclass datasets, and cycle generative models \citep{zhu2017unpaired,isola2017image}, which for two manifolds learn an approximately invertible mapping from one manidold to another. For the latter case we prove a somewhat surprising result that for \emph{any} given pair of data manifolds of the same dimension, one can always train a pair of neural networks which are approximately inverses of one another, and map the first manifold \emph{almost} onto the second one, and vice versa. In this work, we ignore specifics of the training algorithm (for instance, what loss function is used) and merely focus on understanding the generative capabilities of neural networks.
\section{Related work}
A large body of papers is devoted to analyzing the universality of neural networks. Classical works on universality \citep{cybenko1989approximation,hornik1991approximation,haykin1994neural,hassoun1995fundamentals} prove that neural networks with one hidden layer are \emph{universal approximators} and can approximate arbitrary continuous functions on compact sets. Similar results also stand for deep wide networks with ReLU nonlinearities \citep{lu2017expressive}, convolutional neural networks \citep{cohen2016convolutional} and recurrent neural networks \citep{khrulkov2019generalized}.

GANs were mostly studied from point of view of convergence properties \citep{feizi2017understanding,balduzzi2018mechanics,lucic2018gans}. Several works focus on the relation between geometric properties of datasets and behavior of GANs. In order to analyze what characteristics of datasets lead to better convergence, synthetic datasets were studied in \citep{lucic2018gans}. A case of disconnected data manifold (similar in spirit to our analysis in \cref{sec:main}) was analyzed in \citep{khayatkhoei2018disconnected}. A metric for analyzing the quality of GANs based on comparing geometric properties of the original and generated datasets was proposed in \citep{khrulkov2018geometry}.

\section{Notation and assumptions}
We will denote the $d$-cube $[-1, 1]^d$ by $I_d$. We will often use an approximation of a continuous function by a neural network, in that case, the ``network version'' of the function will be indicated by a subscript $\theta$ or $\phi$ indicating a collection of trainable parameters, e.g., $f_{\theta}$ or $g_{\phi}$. 

In this work, we deal with data manifolds. We assume that all these manifolds are smooth, orientable, compact and connected unless stated explicitly. We also assume that all the manifolds are embedded into a Euclidean space $\mathbb{R}^n$, and inherit the Riemannian metric tensor $g$. By \emph{smooth} we will mean infinitely differentiable manifolds (functions), i.e, of class $C^{\infty}$; all the results, however, will stay true if we consider class $C^{r}$ for some finite $r$.  As a norm of a function $f$ defined on some compact set $D$ we will use the $C$-norm: $\|f\|_D = \max_{x \in D} |f(x)|$, and for vectors we use the $2$-norm. 

We will often make use of a natural geometric measure $\mu$ on a manifold, which can be constructed by integrating the \emph{volume form} associated with the Riemannian metric tensor over the corresponding set.
\section{Background}\label{sec:background}
Let us first present some background material necessary for understanding the proofs. We will freely use the term \emph{manifold} in the precise mathematical sense. Due to limited space, we do not provide the definition and refer the reader to thorough introductions such as \citep{lee2013smooth,sakai1996riemannian}. 

First important construction in the proof is the \emph{exponential map}.
\subsection{Exponential map}
Let $\mathcal{M}$ be a Riemannian manifold endowed with a metric tensor $g$. Recall that \emph{geodesics} are locally length minimizing curves, defined as a solution of a certain second-order differential equation. An important property of geodesics is that the length of the velocity vector is preserved along the curve, i.e., for a geodesic $\gamma(t)$ we have 
\begin{equation}\label{eq:const_vel}
    \frac{d}{dt} \|\dot{\gamma}(t)\| = 0.
\end{equation}.

The \emph{exponential map} is defined in the following manner. Let $q \in \mathcal{M}$ and $v \in T_p \mathcal{M}$, and suppose that there exists a geodesic $\gamma: [0,1] \to \mathcal{M}$ with $$\gamma(0) = q, \quad \dot{\gamma}(0) = v .$$
Then the point $\gamma(1) \in \mathcal{M}$ is denoted by $\exp_q(v)$ and called the exponential of the tangent vector $v$. The geodesic $\gamma$ can then be written as $\gamma(t) = \exp_q{(vt)}$.
While apriori the exponential map is defined only if $\|v\|$ is small enough, for certain class of manifolds it is globally defined. Namely, if a manifold is \emph{geodesically complete}, then $\exp_q(v)$ is defined for all $q$ and $v \in T_q \mathcal{M}$. Our proof is based on the following classical result.
\begin{theorem}[Hopf-Rinow]\label{thm:hopf}
Let $(\mathcal{M}, g)$ be a connected Riemannian manifold. Then the following statements are equivalent. 
\begin{itemize}
    \item The closed and bounded subsets of $\mathcal{M}$ are compact;
    \item $\mathcal{M}$ is a complete metric space;
    \item $\mathcal{M}$ is geodesically complete.
\end{itemize}
Furthermore, any of the above implies that any points $p$ and $q$ in $\mathcal{M}$ can be connected by a minimal (length--minimizing) geodesic.
\end{theorem}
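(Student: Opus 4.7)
My plan is to establish the three equivalences via the cycle $(a)\Rightarrow(b)\Rightarrow(c)\Rightarrow(a)$, and then to obtain the ``Furthermore'' clause as a byproduct of the last implication. The step $(a)\Rightarrow(b)$ is routine: any Cauchy sequence is bounded, hence eventually contained in a closed bounded set, which is compact by (a), so the sequence converges. For $(b)\Rightarrow(c)$ I would argue by contradiction: suppose a unit-speed geodesic $\gamma$ has maximal domain $[0,T)$ with $T<\infty$. Since the speed is constant by~\eqref{eq:const_vel}, one has $d(\gamma(s),\gamma(t))\le|t-s|$, so for any $t_n\to T$ the points $\gamma(t_n)$ form a Cauchy sequence and by (b) converge to some $p\in\mathcal{M}$. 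Applying the local existence theorem for the geodesic ODE in a uniformly-sized normal neighborhood of $p$ then lets me extend $\gamma$ past $T$, contradicting maximality.

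The main work is $(c)\Rightarrow$(Furthermore)$\Rightarrow(a)$. Fix $p,q\in\mathcal{M}$, set $r=d(p,q)$, and choose $\rho>0$ small enough that $\exp_p$ is a diffeomorphism on the ball of radius $2\rho$ in $T_p\mathcal{M}$. The geodesic sphere $S_\rho(p)$ is compact, so I pick $x_0=\exp_p(\rho v)$ on it minimizing distance to $q$; since any curve from $p$ to $q$ must cross $S_\rho(p)$, the triangle inequality gives $d(x_0,q)=r-\rho$. Let $\gamma(t)=\exp_p(tv)$, defined for all $t\in\mathbb{R}$ by (c). The goal is to show
\[
A \;=\; \{\, t\in[0,r] : d(\gamma(t),q) = r - t \,\} \;=\; [0,r],
\]
which yields $\gamma(r)=q$ and realizes a minimizing geodesic. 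Closedness of $A$ follows from continuity of $\gamma$ and of the distance function; nonemptiness follows from $\rho\in A$. The crux is openness on the right: if $t_0\in A$ with $t_0<r$, I would repeat the sphere-minimization trick at $\gamma(t_0)$, producing a nearest point $x_1=\exp_{\gamma(t_0)}(\rho' w)$ on a small sphere with $d(x_1,q)=r-t_0-\rho'$, then use two applications of the triangle inequality to conclude $d(p,x_1)=t_0+\rho'$. Hence the broken path $p\rightsquigarrow\gamma(t_0)\rightsquigarrow x_1$ is length-minimizing, and so must be a smooth geodesic, forcing $x_1=\gamma(t_0+\rho')$ and $t_0+\rho'\in A$.

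Once ``Furthermore'' is established, $(c)\Rightarrow(a)$ follows quickly: any closed bounded $K\subset\mathcal{M}$ sits inside $\exp_p(\overline{B_R(0)})$ for some $R$, which is the continuous image of a compact Euclidean ball, hence compact, and $K$ is closed in it. The main obstacle I expect is the openness step, specifically the no-corners argument: showing that a piecewise-smooth curve of length equal to the distance between its endpoints has no kink, so the broken geodesic through $\gamma(t_0)$ must in fact coincide with the extension of $\gamma$. This is the geometric heart of Hopf--Rinow and is the only step that is not essentially formal; everything else reduces to standard metric-space reasoning and the local ODE theory for geodesics.
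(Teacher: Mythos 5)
The paper does not prove this statement at all: Hopf--Rinow is quoted as a classical background result (with pointers to \citep{lee2013smooth,sakai1996riemannian}) and is only \emph{used} later, in \cref{lemma:map}, via the consequence that a compact connected manifold is geodesically complete and that any two points are joined by a minimizing geodesic. So there is no in-paper argument to compare against; what you have written is the standard textbook proof, and as an outline it is correct. Your cycle $(a)\Rightarrow(b)\Rightarrow(c)$ is routine and accurately described; the $(c)\Rightarrow$(Furthermore) step via the closed-and-open set $A=\{t: d(\gamma(t),q)=r-t\}$ and the sphere-minimization trick is exactly the do Carmo/Lee argument, and $(c)\Rightarrow(a)$ via $K\subset\exp_p(\overline{B_R(0)})$ is fine.

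Two places where the sketch genuinely leans on machinery you would still have to supply: (i) in $(b)\Rightarrow(c)$ you need not just a normal neighborhood of the limit point $p$ but a \emph{uniformly totally normal} neighborhood (a single $\varepsilon>0$ such that every point near $p$ has a geodesic ball of radius $\varepsilon$), otherwise the extension past $T$ is not immediate from pointwise local existence; and (ii) the ``no corners'' step --- that a piecewise-geodesic path realizing the distance between its endpoints is an unbroken geodesic --- requires the first variation formula (or the uniqueness clause of the local minimization property inside a normal ball applied at the corner point). You correctly identify (ii) as the geometric heart and leave it as the one non-formal ingredient; that is an honest and accurate assessment, but it is the lemma on which the whole openness argument rests, so a complete write-up must include it. Also note the small boundary case $\rho\ge r$ (when $q$ already lies in the normal ball around $p$), which your opening move implicitly assumes away.
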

In particular, this implies that any compact connected manifold $\mathcal{M}$ is geodesically complete.
\subsection{Hausdorff distance}
The \emph{Hausdorff distance} between two sets $X, Y \subset \mathbb{R}^n$ is defined as follows.
\begin{equation}\label{eq:hausdorff}
d_{H}(X, Y)=\inf \left\{\varepsilon \geq 0 ; X \subseteq [Y]_{\varepsilon} \text { and } Y \subseteq [X]_{\varepsilon}\right\},
\end{equation}
where
\begin{equation}\label{eq:nbrhd}
 [X]_{\varepsilon} :=\bigcup_{x \in X}\{z \in \mathbb{R}^n ; d(z, x) \leq \varepsilon\}.   
\end{equation}
It is well--known that the set of all compact subsets of $\mathbb{R}^n$ endowed with the Hausdorff distance becomes a complete metric space \citep{henrikson1999completeness}.
\subsection{Universal Approximation Property of Neural Networks}
In this paper we heavily rely on the following classical results on neural networks \citep{cybenko1989approximation, hornik1991approximation}.
\begin{theorem}[Universal Approximation Theorem]\label{thm:uat}
Let $\phi:\mathbb{R}\to\mathbb{R}$ be a nonconstant, bounded and continuous function. Then for any continuous function $f:I_n \to \mathbb{R}$ and $\varepsilon>0$ there exists a fully connected neural network $f_{\theta}$ with the activation function $\phi$ and one hidden layer, such that
$$
\max_{x \in I_n}|f(x) - f_{\theta}(x)| < \varepsilon.
$$
\end{theorem}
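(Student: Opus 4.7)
The plan is to follow the classical Hahn--Banach / Riesz representation approach of Cybenko, adapted to the hypothesis that $\phi$ is nonconstant, bounded and continuous. Let $S \subset C(I_n)$ denote the closed linear span (in the sup norm) of the set
\begin{equation*}
\mathcal{F} = \{x \mapsto \phi(w^\top x + b) : w \in \mathbb{R}^n,\ b \in \mathbb{R}\}.
\end{equation*}
Every fully connected single-hidden-layer network with activation $\phi$ is a finite linear combination of elements of $\mathcal{F}$, so it suffices to show $S = C(I_n)$; then approximation within $\varepsilon$ is immediate.

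Suppose for contradiction that $S \neq C(I_n)$. By the Hahn--Banach theorem there exists a nonzero bounded linear functional $L$ on $C(I_n)$ that vanishes on $S$, and by the Riesz--Markov representation theorem $L$ is given by a nonzero finite signed regular Borel measure $\mu$ on $I_n$:
\begin{equation*}
L(f) = \int_{I_n} f(x)\, d\mu(x), \qquad L \equiv 0 \text{ on } S.
\end{equation*}
In particular, $\int_{I_n} \phi(w^\top x + b)\, d\mu(x) = 0$ for every $w$ and $b$. The goal is to derive $\mu = 0$, which will contradict the choice of $L$.

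For this, fix $w \in \mathbb{R}^n$ and let $\mu_w$ be the pushforward of $\mu$ under the linear map $x \mapsto w^\top x$; it is a compactly supported finite signed measure on $\mathbb{R}$. The vanishing identity rewrites as $(\check{\phi} \ast \mu_w)(b) = 0$ for all $b \in \mathbb{R}$, where $\check{\phi}(s) = \phi(-s)$. Taking Fourier transforms in the sense of tempered distributions yields $\widehat{\phi}\cdot \widehat{\mu_w} = 0$ in $\mathcal{S}'(\mathbb{R})$. Because $\mu_w$ has compact support, $\widehat{\mu_w}$ is a (real-)analytic function, and because $\phi$ is bounded, nonconstant and continuous, $\widehat{\phi}$ is a nonzero tempered distribution whose support is not contained in $\{0\}$ (a distribution with Fourier transform supported at the origin is a polynomial, which can be both bounded and nonconstant only by being constant). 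Hence $\widehat{\mu_w}$ vanishes on a set with a limit point, so $\widehat{\mu_w} \equiv 0$, so $\mu_w = 0$. Since this holds for every direction $w$, every projection of $\mu$ vanishes, which by the injectivity of the Radon transform on finite signed measures (or directly by Fourier inversion in $\mathbb{R}^n$) forces $\mu = 0$.

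The main obstacle is the Fourier-analytic step justifying that $\widehat{\phi}$ is not supported at $\{0\}$; this is the genuine content of the theorem and the place where the hypotheses on $\phi$ really enter. The rest of the argument is a standard functional-analytic unwinding: Hahn--Banach produces an annihilating measure, the activation identity becomes a convolution equation, and the Radon-transform inversion closes the loop. Everything else in the write-up would be careful bookkeeping about distributions of compact support and pushforwards.
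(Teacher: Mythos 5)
The paper itself offers no proof of this statement: it is imported verbatim from the classical literature \citep{cybenko1989approximation,hornik1991approximation}, so your attempt is being measured against those proofs rather than against anything in the text. Your skeleton is the standard one — Hahn--Banach plus Riesz--Markov reduces density of the span of ridge functions $\phi(w^\top x+b)$ to showing that the only annihilating finite signed measure is zero, and the hypotheses on $\phi$ enter through a distributional Fourier argument. The observation that $\operatorname{supp}\widehat{\phi}\not\subseteq\{0\}$ for bounded nonconstant $\phi$ (a Fourier transform supported at the origin inverts to a polynomial, and a bounded polynomial is constant) is correct and is indeed the crux.

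There is, however, a genuine gap at the step ``hence $\widehat{\mu_w}$ vanishes on a set with a limit point, so $\widehat{\mu_w}\equiv 0$, so $\mu_w=0$.'' From $\widehat{\phi}\cdot\widehat{\mu_w}=0$ one gets $\operatorname{supp}\widehat{\phi}\subseteq\{\xi:\widehat{\mu_w}(\xi)=0\}$, but ``support not contained in $\{0\}$'' only guarantees a single nonzero frequency $\xi_0$ in that support; it does not produce a zero set with a limit point. Concretely, $\phi(t)=\sin t$ is nonconstant, bounded and continuous with $\operatorname{supp}\widehat{\phi}=\{\pm1\}$, and for a fixed $w$ the annihilation condition then only forces $\widehat{\mu_w}(\pm1)=0$; plenty of nonzero compactly supported signed measures satisfy this, so the intermediate claim $\mu_w=0$ is false in general. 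The repair is standard and actually shortens your ending: use the pushforward identity $\widehat{\mu_w}(s)=\widehat{\mu}(sw)$, fix one $\xi_0\neq 0$ in $\operatorname{supp}\widehat{\phi}$, and conclude $\widehat{\mu}(\xi_0 w)=0$ for every $w\in\mathbb{R}^n$. Since $\xi_0 w$ sweeps out all of $\mathbb{R}^n$, Fourier inversion gives $\mu=0$ directly, the contradiction you want, and the appeal to injectivity of the Radon transform becomes unnecessary. With that substitution your argument closes correctly.
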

In our analysis we restrict ourselves to the case of neural networks of the form considered in \cref{thm:uat}. However, all the results stand for any other learnable parametric maps with the property of being dense in the space of continuous functions.
\section{Geometric Universality Theorem}\label{sec:main}
In this section we prove that for an arbitrary manifold it is possible to construct a neural network, mapping the cube $I_d$ approximately onto this manifold. 
Our analysis is based on the following lemma. In fact, this is a particular case of a much stronger theorem valid even for \emph{topological} manifolds (without smooth structure), for which we provide a discussion and reference further in the text. We, however, believe that this particular case is instructive and provides an intuition on how the generative mappings may look like.
\begin{lemma}\label{lemma:map}
Let $\mathcal{M} \subset \mathbb{R}^n$ be a compact connected $d$-dimensional manifold. Then there exists a smooth map $$f:I_d \to \mathbb{R}^n,$$ such that $f(I_d) = \mathcal{M}$.
\end{lemma}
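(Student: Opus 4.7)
The plan is to realize $\mathcal{M}$ as the image of a single exponential map based at a well-chosen point, and then reparametrize so that the domain becomes the cube $I_d$. First I would pick any $p \in \mathcal{M}$ and invoke \cref{thm:hopf}: since $\mathcal{M}$ is compact and connected, it is geodesically complete, so $\exp_p : T_p\mathcal{M} \to \mathcal{M}$ is globally defined on the whole tangent space. Moreover, the final clause of Hopf-Rinow guarantees that every $q \in \mathcal{M}$ is joined to $p$ by a length-minimizing geodesic, i.e.\ $q = \exp_p(v_q)$ for some $v_q \in T_p\mathcal{M}$ whose norm equals the intrinsic Riemannian distance from $p$ to $q$.

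Second, compactness of $\mathcal{M}$ implies that its intrinsic diameter is finite; call it $R$. By the previous step, $\exp_p$ restricted to the closed ball $\overline{B_R(0)} \subset T_p\mathcal{M}$ already surjects onto $\mathcal{M}$. Identifying $T_p\mathcal{M}$ with $\mathbb{R}^d$ via an orthonormal basis, I would then define
$$
f(x) := \exp_p(Rx), \qquad x \in I_d.
$$
Any $v \in \mathbb{R}^d$ with $\|v\|_2 \le R$ also satisfies $\|v\|_\infty \le R$, so $R\cdot I_d \supseteq \overline{B_R(0)}$, which yields $f(I_d) \supseteq \exp_p(\overline{B_R(0)}) = \mathcal{M}$; the reverse inclusion $f(I_d) \subseteq \mathcal{M}$ is automatic. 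Smoothness of $f$ is inherited from smoothness of $\exp_p$ together with that of the linear rescaling $x\mapsto Rx$.

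The only genuinely non-trivial step is establishing surjectivity of $\exp_p$: geodesic completeness on its own only says that $\exp_p$ is defined on all of $T_p\mathcal{M}$, not that it hits every point of $\mathcal{M}$. The essential ingredient is therefore the ``existence of minimizing geodesics'' clause in Hopf-Rinow, combined with the compactness bound $R < \infty$. Once this surjectivity is in hand, the remaining pieces — the ball-to-cube inclusion, the rescaling, and the smoothness — are routine packaging.
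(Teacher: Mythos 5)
Your proposal is correct and follows essentially the same route as the paper: base the exponential map at a point, use Hopf--Rinow plus the finite diameter to get surjectivity of $\exp_p$ restricted to a closed ball of radius $R$, enclose that ball in a cube, and rescale. The only cosmetic difference is that you write the rescaling explicitly as $f(x)=\exp_p(Rx)$ and spell out the ball-to-cube inclusion, which the paper leaves implicit.
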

\begin{proof}
We will construct this map explicitly. Choose an arbitrary point $q \in \mathcal{M}$, and consider
$$\exp_q:T_q \mathcal{M} \to \mathcal{M}.$$
Since $\mathcal{M}$ is compact and connected, it is geodesically complete and the Hopf-Rinow theorem applies. Thus, this map is defined on $T_q \mathcal{M} \cong \mathbb{R}^d$ and surjective.

We now need to show that we can choose a compact subset of $T_q \mathcal{M}$ such that the restriction of $\exp_q$ to this subset is also surjective. 
To do this observe that since $\mathcal{M}$ is compact it has finite \emph{diameter}, namely $\forall p, q: d(p, q) \leq R_0$ for some finite constant $R_0$. Here $d$ is the Riemannian distance, defined as the arc length of a minimizing geodesic. From \cref{eq:const_vel} it instantly follows that for the (Euclidean) ball $B_{R_0} = \lbrace v \in T_q \mathcal{M} : \|v\| \leq R_0 \rbrace$ we have $\exp_q(B_{R_0}) = \mathcal{M}$. Indeed, since any point on $\mathcal{M}$ is within distance $R_0$ from $q$, there exists a minimal geodesic connecting these points with length bounded by $R_0$. But for any vector $v \in T_q \mathcal{M}$ from \cref{eq:const_vel} we obtain that the length of the corresponding geodesic connecting $q$ and $\exp_q(v)$ is exactly $\|v\|$, which proves the claim.
Statement of the lemma then follows after selecting an arbitrary cube containing $B_{R_0}$ and appropriate rescaling.
\end{proof}
\begin{wrapfigure}{r}{0.5\textwidth}%
	\centering
	\includegraphics[width=0.49\textwidth]{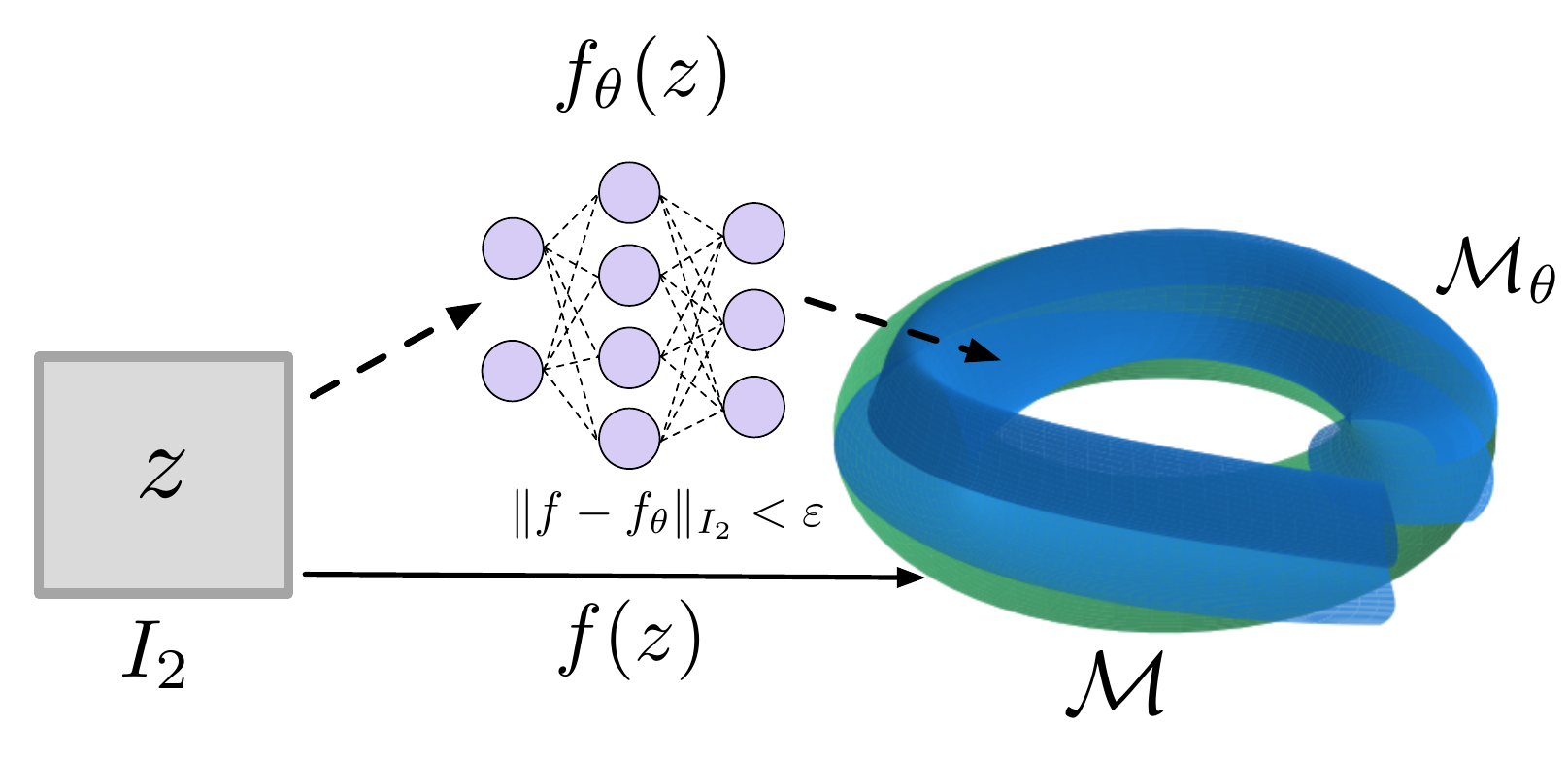}
	\caption{Visualization of the construction in the proof of \cref{thm:universality1}. The latent space $I_2$ is mapped onto the manifold $\mathcal{M}$ via the function $f$. This mapping is then approximated via neural network $f_{\theta}$, which in turn maps $I_2$ onto the compact set $\mathcal{M}_{\theta}$. If $f_{\theta}$ is sufficiently close to $f$ then so are $\mathcal{M}$ and $\mathcal{M}_{\theta}$.}
	\vspace{-10pt}
	\label{fig:manifolds}
\end{wrapfigure}%
\begin{theorem}[Geometric Universality of Generative Models]\label{thm:universality1}
Let $\mathcal{M}$ be a compact connected $d$-dimensional manifold. For every nonconstant, bounded, continuous activation function $\varphi: \mathbb{R} \to \mathbb{R}$ and $\varepsilon > 0$ there exists a fully connected neural network $f_{\theta}(z): I_d \to \mathbb{R}^n$ with the activation function $\varphi$, such that $d_H(\mathcal{M}, \mathcal{M}_{\theta}) < \varepsilon $. Here $\mathcal{M}_{\theta} =  f_{\theta}(I_d)$.
\end{theorem}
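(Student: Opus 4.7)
The plan is to combine Lemma \ref{lemma:map} with the Universal Approximation Theorem in a straightforward way. First I would invoke Lemma \ref{lemma:map} to obtain a smooth surjection $f \colon I_d \to \mathbb{R}^n$ with $f(I_d) = \mathcal{M}$. This reduces the problem from ``approximate the manifold'' to ``approximate a fixed continuous map on a compact cube,'' which is exactly the regime in which \cref{thm:uat} operates.

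Next, I would apply \cref{thm:uat} componentwise. Write $f = (f^1, \dots, f^n)$ with each $f^i \colon I_d \to \mathbb{R}$ continuous on the compact set $I_d$. For any $\delta > 0$, the UAT yields, for each coordinate, a shallow feedforward network $f^i_\theta$ with activation $\varphi$ satisfying $\|f^i - f^i_\theta\|_{I_d} < \delta / \sqrt{n}$. Stacking these side by side (combining the hidden units into one wider hidden layer and taking the $n$ scalar outputs as the output layer) produces a single feedforward network $f_\theta \colon I_d \to \mathbb{R}^n$ with the same activation $\varphi$ and
\begin{equation*}
\max_{x \in I_d} \|f(x) - f_\theta(x)\| \;<\; \delta .
\end{equation*}

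The final step is to pass from this uniform pointwise bound to a Hausdorff bound on the images, which is the easy half of the argument. For any $y \in \mathcal{M}_\theta = f_\theta(I_d)$, write $y = f_\theta(x)$; then $f(x) \in \mathcal{M}$ and $\|y - f(x)\| < \delta$, so $\mathcal{M}_\theta \subseteq [\mathcal{M}]_\delta$. Symmetrically, for any $y = f(x) \in \mathcal{M}$ the point $f_\theta(x) \in \mathcal{M}_\theta$ satisfies $\|y - f_\theta(x)\| < \delta$, so $\mathcal{M} \subseteq [\mathcal{M}_\theta]_\delta$. By the definition \eqref{eq:hausdorff} of Hausdorff distance we conclude $d_H(\mathcal{M}, \mathcal{M}_\theta) \leq \delta$, and choosing $\delta = \varepsilon$ at the outset finishes the proof.

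There is no serious obstacle; the only mildly delicate point is that \cref{thm:uat} is stated for scalar-valued targets, so the componentwise-and-concatenate construction must be mentioned explicitly in order to produce a legitimate single feedforward network with activation $\varphi$ and $n$-dimensional output. All the geometric content — in particular, the fact that $\mathcal{M}$ can be realised as the image of a cube under a \emph{continuous} map defined on the whole of $I_d$, not merely on a subset — has already been absorbed into Lemma \ref{lemma:map} via the exponential map and the Hopf--Rinow theorem, so the present argument does not need to revisit Riemannian geometry.
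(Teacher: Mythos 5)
Your proposal is correct and follows essentially the same route as the paper: invoke \cref{lemma:map} to realise $\mathcal{M}$ as $f(I_d)$, approximate $f$ uniformly on $I_d$ via \cref{thm:uat}, and convert the uniform bound into a two-sided Hausdorff bound using surjectivity of $f$. Your explicit componentwise-and-concatenate treatment of the vector-valued output is a detail the paper leaves implicit, but it does not change the argument.
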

\begin{proof}
Choose an arbitrary $f$ as in \cref{lemma:map}. By the standard universal approximation theorem for neural networks we can find such a neural network that $\|f - f_{\theta}\|_{I_d} < \varepsilon$. Statement of the theorem then follows from the definition of the Hausdorff distance. Indeed, by surjectivity of $f$ we find that every point $x_0 = f(z_0) \in \mathcal{M}$ is within distance $\varepsilon$ from the point $f_{\theta}(z_0) \in \mathcal{M}_{\theta}$, and thus $\mathcal{M} \subset [\mathcal{M}_{\theta}]_{\varepsilon}$ as in \cref{eq:nbrhd}, and conversely $\mathcal{M_{\theta}} \subset [\mathcal{M}]_{\varepsilon}$. See \cref{fig:manifolds} for illustration of the proof.
\end{proof}
Previously we have noted that our \cref{lemma:map} is a particular case of a much stronger result \citep{brown1962mapping}. Namely, it can be stated as follows.
\begin{lemma}[Brown's mapping theorem]\label{lemma:berlanga}
	Let $\mathcal{M}$ be a compact connected $d$-dimensional \emph{topological} manifold. Then there exists a continuous map
	$$f:I_d \to \mathbb{R}^n,$$ such that
	$f(I_d) = \mathcal{M}$.
\end{lemma}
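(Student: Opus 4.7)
The smooth proof of \cref{lemma:map} relied on the exponential map $\exp_q : T_q \mathcal{M} \to \mathcal{M}$, which is unavailable without a differentiable structure. The cleanest replacement is to invoke the classical \emph{Hahn--Mazurkiewicz theorem} from continuum theory: a Hausdorff topological space is a continuous image of $[0,1]$ if and only if it is a compact, connected, locally connected, metrizable space.

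I would verify these hypotheses directly for $\mathcal{M}$. Compactness and connectedness are given. Metrizability (and second countability) follow from the ambient embedding $\mathcal{M} \subset \mathbb{R}^n$. Local connectedness is inherited from the local Euclidean structure: every point has a neighborhood homeomorphic to an open subset of $\mathbb{R}^d$, and $\mathbb{R}^d$ is locally connected. Hahn--Mazurkiewicz then produces a continuous surjection $g : [0,1] \to \mathcal{M}$. Precomposing with any continuous surjection $\pi : I_d \to [0,1]$, for example $\pi(x_1, \ldots, x_d) = (x_1 + 1)/2$, yields the required continuous surjection $f := g \circ \pi : I_d \to \mathcal{M}$.

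The main obstacle is, of course, the appeal to Hahn--Mazurkiewicz itself, whose proof is nontrivial and proceeds by iterating a Peano-type construction along nested chains of connected open sets. A self-contained alternative closer in spirit to Brown's original argument is to cover $\mathcal{M}$ with finitely many closed topological $d$-cells $B_1, \ldots, B_k$ whose interiors still cover $\mathcal{M}$, reorder them using connectedness so that $B_i \cap (B_1 \cup \cdots \cup B_{i-1}) \neq \emptyset$ for each $i \geq 2$, and inductively build continuous surjections $f_i : I_d \to B_1 \cup \cdots \cup B_i$. The inductive step modifies $f_{i-1}$ on a small subcube whose image lies in a chart containing a point of $B_i$; since this modification happens inside a set homeomorphic to $\mathbb{R}^d$, it reduces to an elementary Euclidean extension problem (covering a compact set by a continuous map with prescribed boundary values), which is where the topological manifold structure enters transparently.
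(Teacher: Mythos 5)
Your Hahn--Mazurkiewicz argument is correct and establishes the lemma exactly as stated. Note that the paper itself gives no proof here: it simply cites Brown (1962), whose mapping theorem is strictly stronger than the displayed statement (Brown produces a surjection $I_d \to \mathcal{M}$ restricting to a homeomorphism of the open cube onto a dense open subset of $\mathcal{M}$). Since the paper only ever uses surjectivity --- both in the topological corollary and in the multiclass theorem, where the maps are merely fed into the universal approximation theorem --- your weaker conclusion suffices. Your route also makes visible something the citation hides: a compact connected topological manifold embedded in $\mathbb{R}^n$ is a Peano continuum (compact, connected, metrizable, and locally connected because it is locally Euclidean), hence already a continuous image of $[0,1]$, so the dimension $d$ of the cube plays no role whatsoever in the lemma as stated; precomposing with $(x_1,\ldots,x_d) \mapsto (x_1+1)/2$ finishes it. What Brown's theorem buys in exchange for its harder proof is the dimension-respecting, generically injective parametrization, which would matter if one wanted $f(I_d)$ to be a genuine chart away from a small singular set --- but that is not what this lemma claims or what the paper needs. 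Your second sketch (ordering a finite cover by closed cells via connectedness and inductively extending surjections) is essentially Brown's own strategy and is a reasonable roadmap, though as written the inductive modification step is not yet a complete argument; the Hahn--Mazurkiewicz route alone already carries the proof.
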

Based on this lemma \cref{thm:universality1} can be generalized to include the more general case of topological data manifolds. 
\begin{corollary}[Geometric Universality for Topological Manifolds]
	\Cref{thm:universality1} holds true for $\mathcal{M}$ being an arbitrary compact connected topological manifold.
\end{corollary}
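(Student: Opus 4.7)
The plan is to mirror the proof of \cref{thm:universality1} almost verbatim, substituting Brown's mapping theorem (\cref{lemma:berlanga}) for \cref{lemma:map}. The key observation is that the smoothness of the surjection produced in \cref{lemma:map} is never actually used in the argument: the only properties of $f$ invoked there are that it is continuous (so that \cref{thm:uat} applies) and that it is surjective (so that every point of $\mathcal{M}$ has a preimage in $I_d$ to be paired with the network output). Both of these are provided by \cref{lemma:berlanga} in the purely topological setting, so no new geometric input is needed.

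Concretely, first I would apply \cref{lemma:berlanga} to obtain a continuous surjection $f: I_d \to \mathcal{M} \subset \mathbb{R}^n$. Writing $f = (f_1, \dots, f_n)$ in coordinates, each $f_i: I_d \to \mathbb{R}$ is a continuous scalar function on the compact cube $I_d$, so by \cref{thm:uat} it can be approximated to within $\varepsilon/\sqrt{n}$ by a single-hidden-layer network $f_{\theta,i}$ with activation $\varphi$. Stacking these $n$ output heads on top of a shared input produces a fully connected network $f_{\theta}: I_d \to \mathbb{R}^n$ satisfying $\|f(z) - f_{\theta}(z)\| < \varepsilon$ uniformly on $I_d$.

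The Hausdorff bound then transfers verbatim from the proof of \cref{thm:universality1}: given any $x = f(z_0) \in \mathcal{M}$, the point $f_{\theta}(z_0) \in \mathcal{M}_{\theta}$ lies within distance $\varepsilon$ of $x$, giving $\mathcal{M} \subset [\mathcal{M}_{\theta}]_{\varepsilon}$; conversely every $f_{\theta}(z_0) \in \mathcal{M}_{\theta}$ is within $\varepsilon$ of $f(z_0) \in \mathcal{M}$, giving $\mathcal{M}_{\theta} \subset [\mathcal{M}]_{\varepsilon}$, so that $d_H(\mathcal{M}, \mathcal{M}_{\theta}) \le \varepsilon$.

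The only genuine mathematical obstacle is \cref{lemma:berlanga} itself, which is cited as a black box from \citep{brown1962mapping}; once it is available, neither the Hopf-Rinow theorem nor the exponential map plays any role, which is why the smoothness hypothesis on $\mathcal{M}$ can be dropped. A minor bookkeeping point is that \cref{thm:uat} only produces scalar-valued approximators, so one must assemble them into a vector-valued network and distribute the error budget across the $n$ output coordinates, but this is purely routine and does not require the continuous activation $\varphi$ to satisfy any additional conditions beyond those already assumed in \cref{thm:universality1}.
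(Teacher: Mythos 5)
Your proposal is correct and follows exactly the route the paper intends: substitute Brown's mapping theorem (\cref{lemma:berlanga}) for \cref{lemma:map} and observe that the proof of \cref{thm:universality1} only uses continuity and surjectivity of $f$, never its smoothness. The extra bookkeeping about assembling scalar approximators into a vector-valued network with an $\varepsilon/\sqrt{n}$ budget per coordinate is a detail the paper glosses over but is handled correctly.
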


\paragraph{Multiclass case}
The previous theorem considers only the case of a single data manifold. However, commonly in practice, single datasets contain samples from multiple data manifolds (e.g, MNIST digits, ImageNet classes). Since we can assume that these manifolds do not intersect, it is impossible to map a connected latent space surjectively onto this disconnected joint data manifold. To counteract this effect we can allow small pieces of latent space to map into thin ``tunnels'' connecting those manifolds. This can be made precise by the following statement.
\begin{theorem}[Geometric Universality for Multiclass Manifolds]
	Let $\mathcal{M} = \sqcup_{i=1}^c \mathcal{M}_i$ be a ``multiclass'' data manifold, with each $\mathcal{M}_i$ being a compact connected $d$-dimensional topological manifold. Then for every $\delta$ and every nonconstant, bounded, continuous activation function $\varphi:\mathbb{R} \to \mathbb{R}$ there exists a fully connected neural network $f_{\theta}(z):I_d\to \mathbb{R}^n$ with the activation function $\varphi$ such that the following properties hold.
	\begin{itemize}
		\item There exists a collection $\lbrace D_i \rbrace_{i=1}^c$ of disjoint compact subsets of $I_d$ such that 
		\begin{equation}
			\forall i \ d_H(f_{\theta}(D_i), \mathcal{M}_i) < \varepsilon.
		\end{equation}
		
		\item $\mu(I_d \setminus \sqcup_{i=1}^c D_i) \leq \delta.$
	\end{itemize} 
	
\end{theorem}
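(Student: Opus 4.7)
The plan is to imitate the proof of \cref{thm:universality1}, with two adaptations: (i) carve the latent cube $I_d$ into $c$ disjoint compact pieces, one per class, whose complement has measure at most $\delta$; and (ii) build a single continuous target map on $I_d$ whose restriction to the $i$-th piece surjects onto $\mathcal{M}_i$. Once such a map is in hand, the Universal Approximation Theorem takes over exactly as before.

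First I would construct the pieces $D_i$ by slicing along one coordinate. Partition $[-1,1]$ into $c$ closed subintervals $J_1,\dots,J_c$ separated by $c-1$ open gaps, and set $D_i = J_i \times I_{d-1}$. Choosing each gap to have length less than $\delta/[2(c-1)]$ guarantees $\mu(I_d \setminus \sqcup_{i=1}^c D_i) \leq \delta$. Each $D_i$ is affinely isomorphic to $I_d$, so Brown's mapping theorem (\cref{lemma:berlanga}) supplies a continuous surjection $f_i : D_i \to \mathcal{M}_i$.

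Next I would glue the $f_i$'s into a single continuous $f : I_d \to \mathbb{R}^n$. Since $\sqcup_i D_i$ is closed in $I_d$ and the $D_i$'s are pairwise disjoint and individually closed, the piecewise definition $f|_{D_i} = f_i$ is continuous on $\sqcup_i D_i$. The Tietze extension theorem, applied coordinate-wise in $\mathbb{R}^n$, then yields a continuous extension $f : I_d \to \mathbb{R}^n$. Now invoke \cref{thm:uat} coordinate-wise to obtain a fully connected network $f_\theta$ with $\|f - f_\theta\|_{I_d} < \varepsilon$. For each $i$, since $f(D_i) = \mathcal{M}_i$, the same Hausdorff estimate as in the proof of \cref{thm:universality1} gives $d_H(f_\theta(D_i), \mathcal{M}_i) < \varepsilon$: every $p \in \mathcal{M}_i$ is $f(z)$ for some $z \in D_i$, hence within $\varepsilon$ of $f_\theta(z) \in f_\theta(D_i)$, and conversely.

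The one delicate ingredient is the continuous gluing, handled by Tietze; geometrically, the network uses the thin buffer slabs between the $D_i$'s to interpolate between the disjoint images $\mathcal{M}_i \subset \mathbb{R}^n$, which is why these buffer regions cannot be removed entirely. The measure bound $\mu(I_d \setminus \sqcup_i D_i) \leq \delta$ depends only on the widths of the gaps and is completely decoupled from the manifold approximation error $\varepsilon$, so the two parameters can be chosen independently.
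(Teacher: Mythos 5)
Your proposal follows essentially the same route as the paper's proof: slice $I_d$ into $c$ closed slabs along one coordinate separated by thin gaps of total measure at most $\delta$, apply Brown's mapping theorem (\cref{lemma:berlanga}) on each slab to surject onto the corresponding $\mathcal{M}_i$, glue into a single continuous map on $I_d$, and then invoke \cref{thm:uat} together with the Hausdorff estimate from \cref{thm:universality1}. The only (cosmetic) difference is the gluing mechanism --- you use the Tietze extension theorem where the paper linearly interpolates each $f_i$ across the gap to its neighbor --- and both are perfectly adequate here.
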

\begin{proof}
Similar to the proof of \cref{thm:universality1} we will apply the universal approximation theorem to a certain function constructed with the help of \cref{lemma:berlanga}. To construct such function let us select sets $D_i$ in the following way. We divide the interval $[-1, 1]$ uniformly into $c$ intervals, namely $[x_0, x_1], [x_1, x_2], \hdots ,[x_{c-1}, x_c]$ with length of each interval being $\frac{1}{c}$ and $x_0=-1,x_c=1$. We propose to use the following $D_i$, satisfying conditions of the corollary. Denote $h = \frac{\delta}{2(c-1)}$,
\begin{equation}
D_i = \begin{cases}
\left[x_i, x_{i+1} - h\right] \times [-1, 1]^{d-1},\ i=0, \\ 
\left[x_i + h, x_{i+1} - h\right] \times [-1, 1]^{d-1}, 0<i<c-1,\\
  \left[x_i + h, x_{i+1}\right] \times [-1, 1]^{d-1}, i=c-1.
\end{cases}
\end{equation}
Intuition is very simple: we chop down the cube $D$ on the first axis into smaller boxes, and remove some space between them. On each of the chunks $D_i$ we can now apply \cref{lemma:berlanga} for the corresponding manifold $\mathcal{M}_i$, obtaining a collection of maps $\lbrace f_i \rbrace_{i=1}^{c}$. To construct a global continuous map $f$ we can now simply linearly interpolate each of the maps $f_i$ from the right boundary \mbox{$[x_{i+1} - h]\times [-1, 1]^{d-1}$} of one box to the left boundary $[x_{i+1} + h]\times [-1, 1]^{d-1}$ of the neighboring one. By applying the universal approximation theorem to this function $f$, we finalize the proof.
\end{proof}

\section{Invariance property of deep expanding networks}\label{thm:universality}
Our previous results state that it is possible to approximate any given manifold $\mathcal{M}$ up to some accuracy. However, neural networks used in the proof are shallow (they have one hidden layer) and are not practical. In this section, we study how the set $\mathcal{M}_{\theta}$ looks like for more practical networks consisting of a series of fully connected and convolutional layers. We will show a somewhat surprising result that under certain mild conditions such networks cannot significantly transform the latent space, more precisely the generated set $\mathcal{M}_{\theta}$ will be diffeomorphic to the open unit cube $(-1, 1)^d$. In fact, our results will be more general and will demonstrate that this property holds for arbitrary latent spaces, that is if $z$ is sampled from some manifold $\mathcal{M}_z$, then $\mathcal{M}_{\theta}$ will be diffeomorphic to $\mathcal{M}_z$.

\subsection{Reminder on embeddings}
Recall the following definition. 
\begin{definition}[Smooth embedding]
Let $\mathcal{M}$ and $\mathcal{N}$ be smooth manifolds and $f:\mathcal{M}\to\mathcal{N}$ be a smooth map. Then $f$ is called an embedding is the following conditions hold.
\begin{itemize}
	\item Derivative of $f$ is everywhere injective;
	\item $f$ is an injective, continuous and open map (i.e, maps opens sets to open sets).
\end{itemize}
\end{definition}
The main property of a smooth embedding is the following \citep{lee2013smooth}.
\begin{proposition}\label{prop:embedding}
The domain of an embedding is diffeomorphic to its image.
\end{proposition}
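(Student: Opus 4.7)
The plan is to exhibit a smooth inverse $g:f(\mathcal{M})\to\mathcal{M}$ to the map $f:\mathcal{M}\to f(\mathcal{M})$, by building it locally from the rank theorem and then patching the local pieces together using the topological embedding hypothesis. Throughout let $d=\dim\mathcal{M}$ and $n=\dim\mathcal{N}$; note that $d\leq n$ since $df_p$ is injective.

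First I would invoke the local immersion (constant-rank) theorem. Because $df_p$ is injective at every $p$, $f$ has constant rank $d$, so for each $p\in\mathcal{M}$ there exist a chart $(U,\varphi)$ around $p$ and a chart $(V,\psi)$ around $f(p)$ with $f(U)\subseteq V$ such that
\begin{equation*}
\psi\circ f\circ\varphi^{-1}(x_1,\dots,x_d)=(x_1,\dots,x_d,0,\dots,0).
\end{equation*}
Letting $\pi:\mathbb{R}^n\to\mathbb{R}^d$ be projection onto the first $d$ coordinates, the formula $g_V:=\varphi^{-1}\circ\pi\circ\psi$ is a smooth map $V\to U$ satisfying $g_V\circ f=\mathrm{id}_U$.

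Next I would use the embedding hypothesis to turn these local left inverses into a global smooth inverse on $f(\mathcal{M})$. Because $f$ is an injective continuous open map, it is a homeomorphism onto its image when $f(\mathcal{M})$ carries the subspace topology from $\mathcal{N}$. Consequently $f(U)$ is open in $f(\mathcal{M})$, so after shrinking the ambient chart $V$ we may arrange $V\cap f(\mathcal{M})=f(U)$. Thus the restriction of $g_V$ to $V\cap f(\mathcal{M})$ is a genuine inverse of $f|_U$ and is smooth in the submanifold sense (it is the restriction of a smooth ambient map). On overlaps the set-theoretic inverse is unique, so the local pieces glue into a globally well-defined map $g:f(\mathcal{M})\to\mathcal{M}$ which is smooth because smoothness is a local property. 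Then $g\circ f=\mathrm{id}_{\mathcal{M}}$ and $f\circ g=\mathrm{id}_{f(\mathcal{M})}$, so $f$ is a diffeomorphism onto its image.

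The subtle point I expect to be the main obstacle is justifying the identity $V\cap f(\mathcal{M})=f(U)$ after shrinking $V$; this is exactly where the ``open map'' clause in the definition of embedding is needed. Without it, an injective immersion may have its image re-enter a chart $V$ from far away (the textbook example being a dense one-parameter subgroup of the torus), and the naive projection $\pi\circ\psi$ would then fail to invert $f$ globally. Once this point is settled the rest is a routine assembly, and \cref{prop:embedding} follows.
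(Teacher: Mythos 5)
Your argument is correct: it is the standard textbook proof (local immersion/rank theorem to get local smooth left inverses, then the openness of $f$ onto its image to shrink the ambient chart so that $V\cap f(\mathcal{M})=f(U)$ and glue), and you correctly identify the openness clause as the point that rules out pathologies like the dense line on the torus. The paper itself offers no proof of \cref{prop:embedding} --- it is stated as a known fact with a citation to \citep{lee2013smooth} --- so there is nothing to compare against beyond noting that your argument is exactly the cited one; the only step worth making fully explicit is that your shrunken slice charts simultaneously show $f(\mathcal{M})$ is an embedded submanifold of $\mathcal{N}$, which is what gives the phrase ``diffeomorphic to its image'' its meaning and justifies calling the restricted $g_V$ smooth on $V\cap f(\mathcal{M})$.
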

We will show that certain neural networks commonly used for generative models are in fact smooth embeddings, and thus their image is diffeomorphic to the domain (latent space). We analyze two most commonly used layers in such models: fully connected and convolutional layers (both standard and transposed). For the sake of simplicity we assume that convolutions are \emph{circularly} padded, i.e., the input presents a two-dimensional torus; in this case, when the offset calls for a pixel that is off the left end of the image, the layer ``wraps around'' to take it from the opposite end. We consider arbitrary stride, in order to allow for a layer to increase the spatial size of a feature tensor, as commonly done. 

Let us fix the nonlinearity $\sigma(z)$ to be an arbitrary smooth monotonous function without saddle points ($\sigma'(z) \neq 0$). Then the following two lemmas hold. Let us first assume that the latent space is the Euclidean space $\mathbb{R}^d$ (or equivalently, an open unit cube $(-1, 1)^d$). 
\begin{lemma}\label{lemma:linear}
Let $f(z) = \sigma(Az + b)$ with $A \in \mathbb{R}^{n \times m}$ be a fully connected layer. If $n \geq m$ then $f(z)$ is a smooth embedding for all $A$ except for a set of measure zero. We will call such a layer an \textbf{expanding fully connected layer}.
\end{lemma}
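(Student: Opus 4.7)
The plan is to verify, in order, the three conditions in the definition of a smooth embedding for $f(z)=\sigma(Az+b)$, showing that the offending set of matrices $A$ for which any condition can fail is contained in a single measure-zero set.

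First I would compute $df_z = \mathrm{diag}(\sigma'(Az+b))\,A$. Because $\sigma$ is smooth with $\sigma'\neq 0$ everywhere, the diagonal factor is invertible at every $z$, so injectivity of $df_z$ is equivalent to $A\in\mathbb{R}^{n\times m}$ having full column rank $m$. Since $n\ge m$, the locus of rank-deficient matrices is the zero set of all $m\times m$ minors of $A$; this is a proper algebraic subvariety of $\mathbb{R}^{n\times m}$, hence closed and of Lebesgue measure zero. Outside this set, the first bullet of the definition holds.

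Next, for smoothness, continuity, and injectivity of $f$ itself: smoothness and continuity are automatic since $\sigma$ and affine maps are smooth. For injectivity, if $\sigma(Az+b)=\sigma(Az'+b)$, the componentwise strict monotonicity of $\sigma$ forces $Az+b=Az'+b$, i.e.\ $A(z-z')=0$, and full column rank of $A$ then gives $z=z'$. So the same generic condition on $A$ suffices.

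The remaining requirement is that $f$ be open onto its image, equivalently that $f:\mathbb{R}^m \to f(\mathbb{R}^m)$ be a homeomorphism. Here I would exhibit an explicit continuous left inverse. Because $\sigma$ is a smooth strictly monotonic function with nonvanishing derivative, it is a diffeomorphism onto its image and its componentwise inverse $\sigma^{-1}$ is continuous on that image. Under the generic full-column-rank condition on $A$, the Moore--Penrose pseudoinverse $A^{+}=(A^{\top}A)^{-1}A^{\top}$ is well-defined, and the composition
\begin{equation}
 y \;\longmapsto\; A^{+}\bigl(\sigma^{-1}(y)-b\bigr)
\end{equation}
is continuous on $f(\mathbb{R}^m)$ and agrees with $f^{-1}$ there, since $A^{+}(\sigma^{-1}(f(z))-b)=A^{+}(Az)=z$. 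This shows $f$ is a topological embedding, and combined with the previous two bullets it is a smooth embedding.

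The only step that feels like it could hide a subtlety is the openness/homeomorphism condition, because injective smooth immersions from noncompact domains are not automatically topological embeddings, and properness can fail when $\sigma$ is bounded (e.g.\ $\tanh$). The explicit continuous inverse built from $\sigma^{-1}$ and $A^{+}$ sidesteps this issue cleanly, so I expect the bulk of the write-up to reduce to stating the derivative calculation and invoking the fact that rank-deficient matrices form a measure-zero set when $n\ge m$.
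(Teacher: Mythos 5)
Your proof is correct, and its overall skeleton (check the derivative condition, injectivity, and openness separately; observe that each only fails on the rank-deficient locus, which is a proper algebraic subvariety and hence has measure zero) matches the paper's. Where you genuinely diverge is in the openness step. The paper disposes of it in one line by saying $f$ is open as a composition of a linear map, ``which is trivially open,'' with the open map $\sigma$ --- but an injective linear map $\mathbb{R}^m\to\mathbb{R}^n$ with $n>m$ is \emph{not} open into $\mathbb{R}^n$ (its image has empty interior), so the paper's phrasing only makes sense if read as openness onto the image, which is exactly the point that needs an argument. You instead exhibit the explicit continuous left inverse $y\mapsto A^{+}(\sigma^{-1}(y)-b)$, which directly certifies that $f$ is a homeomorphism onto its image and correctly sidesteps the standard pitfall that an injective immersion from a noncompact domain need not be a topological embedding (properness can indeed fail for bounded $\sigma$). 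Your version is the more rigorous of the two at this step and costs only the observation that $A^{\top}A$ is invertible under the same genericity condition already in play; the paper's version is shorter but leans on an assertion that, taken literally, is false.
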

\begin{proof}
Indeed, such a map is injective. It is open as a composition of a linear map (which is trivially open), and of $\sigma(z)$ which is open since it is a continuous monotonous function. Then for all matrices $A$ of full rank (which form a set of full measure in the space of matrices of size $n \times m$) the derivative is injective by a simple application of the chain rule and the fact that $\sigma'(z) \neq 0$.
\end{proof}
Let us now deal with the convolutional layers. 
\begin{lemma}\label{lemma:conv}
Let $z$ be a $3$rd--order tensor tensor representing a feature tensor of size $m \times m$ with $k$ channels. Suppose that $f(z) = \sigma(\mathrm{Conv}(z) + b)$ is a standard convolutional or transposed convolutional layer with an arbitrary stride. Suppose that $\mathrm{Conv}$ is parameterized via a kernel parameter $C \in \mathbb{R}^{l \times k \times s \times s}$, such that $f(z)$ is a feature tensor of size $n \times n$ with $l$ channels. If $n^2l \geq m^2k$ then $f(z)$ is a smooth embedding for all $C$ except for a set of measure zero. We will call such a layer an \textbf{expanding convolutional layer}. 
\end{lemma}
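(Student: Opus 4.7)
The plan is to mirror the proof of \cref{lemma:linear}. A circularly-padded convolution (standard or transposed, with arbitrary stride) is still a linear map $L_C \colon \mathbb{R}^{m^2 k} \to \mathbb{R}^{n^2 l}$ whose entries depend linearly on the kernel entries $C \in \mathbb{R}^{l \times k \times s \times s}$, so we may write $f(z) = \sigma(L_C z + b)$. Once we know $L_C$ has full column rank $m^2 k$, injectivity of $f$ and openness onto its image follow exactly as in \cref{lemma:linear} (composition of an injective affine map with the monotone $\sigma$), while injectivity of $df$ follows from the chain rule together with $\sigma' \neq 0$. So the whole lemma reduces to showing that the ``bad set'' $\Sigma = \{\, C : \mathrm{rank}(L_C) < m^2 k \,\}$ has Lebesgue measure zero in the kernel parameter space $\mathbb{R}^{l k s^2}$.

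For this reduction, I would note that each $(m^2 k) \times (m^2 k)$ minor of $L_C$ is a polynomial in the entries of $C$, so $\Sigma$ is a real algebraic subvariety of the parameter space. Such a subvariety either equals the whole parameter space or is proper, and in the latter case has Lebesgue measure zero, so it suffices to exhibit a single kernel $C^{\ast}$ at which $L_{C^{\ast}}$ has rank $m^2 k$. Producing this one explicit example is the step I expect will carry the real technical weight, because the index bookkeeping for a strided circular convolution is easy to mishandle.

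The construction of $C^{\ast}$ should use the dimensional hypothesis $n^2 l \geq m^2 k$ essentially. For a standard convolution whose stride and output size satisfy $n \leq m$ (with $n$ dividing $m$ for convenience under circular padding), the hypothesis gives $l \geq k (m/n)^2$, meaning the $l$ output channels are numerous enough to separately record every one of the $k (m/n)^2$ entries of the receptive window sitting above each output location. Choosing $C^{\ast}$ so that distinct output channels act as indicators of distinct positions within the window realizes exactly such a map; its $n^2 \cdot k (m/n)^2 = m^2 k$ designated output coordinates then determine $z$ uniquely, and so $L_{C^{\ast}}$ is injective. The transposed-convolution case ($n \geq m$) is dual: a kernel that scatters each input pixel into a distinct cell of the upsampled output grid gives an injective $L_{C^{\ast}}$ under the same dimension count. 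With any one such $C^{\ast}$ in hand, $\Sigma$ is a proper algebraic subvariety, hence of measure zero, and the remainder of the lemma follows verbatim from the proof of \cref{lemma:linear}.
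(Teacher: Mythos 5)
Your proposal follows essentially the same route as the paper: reduce everything to generic full column rank of the linear map $z \mapsto \mathrm{Conv}(z)$, observe that the degeneracy locus is an algebraic subvariety of the kernel-parameter space (hence either all of it or measure zero), and exhibit a single full-rank witness so that the rest follows as in \cref{lemma:linear}. The only difference is the witness itself --- the paper uses the Kronecker-delta kernel $C[i,j,p,q]=\delta_{ij}$ at $p=q=1$ and handles strides by noting the strided matrix selects rows of the stride-one matrix, whereas you propose channel-indexed indicators of distinct window positions --- a cosmetic variation, and if anything yours is the more careful treatment of the strided case.
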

\begin{proof}
The only non-trivial part of the proof is showing injectivity of this layer for all $C$ but measure zero. Note that if $n^2l \geq m^2k$ then the matrix representing the linear map performing the $\mathrm{Conv}$ operation is vertical, hence it is sufficient to show that generically it is of full rank. In the case of the transposed convolution, we can transpose this matrix and analyze the corresponding convolutional layer. 
\paragraph{Stride one}
Let us start with the most important case of stride being one, in which case $m=n$. Denote the matrix of the linear map underlying $\mathrm{Conv}$ by $\widehat{C} \in \mathbb{R}^{n^2l \times n^2k}$, that is $\mathrm{vec}(\mathrm{Conv}(x)) = \widehat{C} \mathrm{vec}(x)$, where $\mathrm{vec}$ denotes the \emph{vectorization} operator. We need to show that for all $C$ but measure zero this matrix is of full rank.

To prove the lemma we use the following simple argument coming from algebraic geometry. The condition of matrix $\widehat{C}$ \emph{not} being a full rank is \emph{algebraic} (i.e., is given by polynomial equations) in the space of parameters $C$. Indeed, the operation of constructing $\widehat{C}$ based on $C$ is linear with respect to $C$, and the condition of not being a full rank in the space of \emph{all} matrices is specified 
by a set of polynomial equations (namely, determinants of all maximal square submatrices should be zero). Thus, we have shown that set $C_{singular} = \lbrace C \in \mathbb{R}^{l \times k \times s \times s} \ | \ \widehat{C} \text{ is not of full rank} \rbrace$ is algebraic; and by the well-known property of algebraic sets there are two options: either $\mu(C_{singular})=0$ or $C_{singular}=\mathbb{R}^{l \times k \times s \times s} $ (with $\mu$ being the standard Lebesgue measure).	To show that the latter does not hold, we provide a concrete example of a weight $C$ not in $C_{singular}$. Namely, consider the following $C$.
\begin{equation}
C[i, j, p, q] = \begin{cases}
\delta_{ij}, \quad p=q=1, \\
0, \quad \text{otherwise}.
\end{cases}
\end{equation}
Here $\delta_{ij}$ denotes the Kronecker delta symbol:
$$
\delta_{ij} = \begin{cases}
1, i=j, \\
0, i \neq j.
\end{cases}
$$
We observe that the corresponding matrix $\widehat{C}$ is of particularly simple structure:
$$\widehat{C}[i, j] = \delta_{ij},$$ which trivially is of full rank. 

\paragraph{Arbitrary stride}
The same argument as before applies. Notice that selection of a bigger stride corresponds to selecting specific rows from the matrix $\widehat{C}$ obtained for stride one. By using the same weight tensor $C$ as in the case of stride one, we find that the obtained matrix $\widehat{C}$ contains $\min(m^2k, n^2l)$ distinct rows of the identity matrix, followed by possible zero rows and thus also has full rank.

\end{proof}

After these preliminary results, we are ready to extend them to the case of arbitrary latent space. Namely, suppose that $z$ is sampled from an arbitrary manifold $\mathcal{M}_z \subset \mathbb{R}^d$. We use the following simple lemma.
\begin{lemma}\label{lemma:restriction}
Let $f:\mathcal{M} \to \mathcal{N}$ be an arbitrary smooth embedding. Let $\mathcal{S} \subset \mathcal{M}$ be a smooth embedded submanifold. Then $f|_{\mathcal{S}}$ is also a smooth embedding.
\end{lemma}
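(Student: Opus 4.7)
The plan is to observe that $f|_{\mathcal{S}}$ factors as the composition $f \circ \iota$, where $\iota : \mathcal{S} \hookrightarrow \mathcal{M}$ is the inclusion. Since $\mathcal{S}$ is assumed to be a smoothly embedded submanifold of $\mathcal{M}$, the inclusion $\iota$ is itself a smooth embedding, so the lemma reduces to the general fact that a composition of two smooth embeddings is a smooth embedding. I would then verify in turn the four conditions from the definition: smoothness of the map, injectivity of the map, injectivity of the differential, and the open-map (equivalently, homeomorphism-onto-image) condition.

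The first three conditions are essentially automatic. Smoothness of $f \circ \iota$ and its injectivity follow from the corresponding properties of $f$ and $\iota$. For the differential, the chain rule gives
\begin{equation*}
d(f \circ \iota)_p \;=\; df_{\iota(p)} \circ d\iota_p,
\end{equation*}
and both factors are injective linear maps (the first because $f$ is an embedding, the second because $\iota$ is), so their composition is injective.

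The remaining condition — that $f \circ \iota$ is an open map onto its image — is the only place that warrants care, and is also the only place one might call an obstacle. The cleanest way to handle it is to reinterpret the open-map condition in its usual topological form: that $f \circ \iota$ is a homeomorphism from $\mathcal{S}$ onto $f(\mathcal{S})$ endowed with the subspace topology inherited from $\mathcal{N}$. By hypothesis, $\iota$ is a homeomorphism $\mathcal{S} \to \iota(\mathcal{S})$ onto the subspace of $\mathcal{M}$, and $f$ is a homeomorphism $\mathcal{M} \to f(\mathcal{M})$ onto the subspace of $\mathcal{N}$; restricting the latter homeomorphism to the subset $\iota(\mathcal{S}) \subset \mathcal{M}$ gives a homeomorphism $\iota(\mathcal{S}) \to f(\iota(\mathcal{S}))$ between the induced subspace topologies, and composing yields the required homeomorphism. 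The one subtle point is to be explicit that "open" refers to openness onto the image rather than onto $\mathcal{N}$: when $\dim \mathcal{S} < \dim \mathcal{N}$, $f \circ \iota$ is certainly not an open map into $\mathcal{N}$, but this is the standard convention adopted by the paper's definition of embedding and by \cref{prop:embedding}.
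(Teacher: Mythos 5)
Your proof is correct and follows essentially the same route as the paper's: the paper likewise verifies the definition directly, observing that $T_x\mathcal{S} \subset T_x\mathcal{M}$ so the restricted derivative stays injective (your chain-rule computation with $d\iota_p$ is the same statement), and then notes that $f|_{\mathcal{S}}$ remains injective and open. Your treatment is somewhat more careful than the paper's one-line assertion of the openness-onto-image condition, but it is the same argument, not a different one.
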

\begin{proof}
The proof follows from the definition. Indeed, for every point $x \in \mathcal{S} \subset \mathcal{M}$ we have $T_x\mathcal{S} \subset T_x \mathcal{M}$ and restriction of the derivative of $f$ onto this subspace is also injective. Note that $f|_{\mathcal{S}}$ is also injective and open map.
\end{proof}

By combining \cref{lemma:linear,lemma:conv,lemma:restriction,prop:embedding} we obtain the following result.
\begin{theorem}\label{thm:top_inv}
Let $f_{\theta}(z)$ be an arbitrary neural network consisting of expanding fully connected layers and expanding convolutions, and $z \in \mathcal{M}_z \subset \mathbb{R}^d$. Denote \mbox{$\mathcal{M}_{\theta} = f_{\theta}(\mathcal{M}_z)$}. Then for all parameters $\theta$ but measure zero the following properties hold:
\begin{itemize}
	\item  $\mathcal{M}_{\theta}$ is a smooth embedded manifold;
	\item $\mathcal{M}_{\theta} \simeq \mathcal{M}_z$.
\end{itemize}
\end{theorem}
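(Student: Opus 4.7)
The plan is to obtain the theorem by composing the single-layer embedding results of \cref{lemma:linear,lemma:conv}, then restricting to $\mathcal{M}_z$ via \cref{lemma:restriction}, and finally invoking \cref{prop:embedding} to upgrade the embedding to a diffeomorphism onto its image.

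More concretely, I would write $f_{\theta} = \ell_L \circ \ell_{L-1} \circ \cdots \circ \ell_1$, where each $\ell_i$ is either an expanding fully connected layer (covered by \cref{lemma:linear}) or an expanding (transposed) convolutional layer (covered by \cref{lemma:conv}). Let $\theta = (\theta_1,\ldots,\theta_L)$ live in the product parameter space $\Theta = \prod_i \Theta_i$. Each lemma supplies a measure-zero exceptional set $S_i \subset \Theta_i$; by Fubini the corresponding cylinder $\Theta_1 \times \cdots \times S_i \times \cdots \times \Theta_L$ has measure zero in $\Theta$, and the finite union of these cylinders is still measure zero. Outside this exceptional set every $\ell_i$ is a smooth embedding of its ambient Euclidean domain into the next Euclidean space.

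Next I would verify that the composition of smooth embeddings between open subsets of Euclidean spaces is again a smooth embedding. This is essentially book-keeping from the definition: smoothness and injectivity compose trivially; injectivity of the derivative follows from the chain rule, since a composition of injective linear maps between finite-dimensional spaces is injective; and openness is preserved under composition because a composition of open maps is open. Hence $f_{\theta}: \mathbb{R}^d \to \mathbb{R}^n$ is a smooth embedding for all $\theta$ outside a measure-zero set. Applying \cref{lemma:restriction} to the inclusion $\mathcal{M}_z \hookrightarrow \mathbb{R}^d$ yields that $f_{\theta}|_{\mathcal{M}_z}$ is a smooth embedding as well. By \cref{prop:embedding} its domain $\mathcal{M}_z$ is diffeomorphic to its image $\mathcal{M}_{\theta} = f_{\theta}(\mathcal{M}_z)$, and since the image of a smooth embedding is automatically a smooth embedded submanifold of $\mathbb{R}^n$, both bullets follow.

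The only genuinely delicate point is the measure-zero bookkeeping across layers: one needs the individual exceptional sets (which live in distinct coordinate blocks of $\Theta$) to combine into a measure-zero subset of the full product, which is immediate from Fubini but worth noting. Everything else is a direct unwinding of definitions given the lemmas already in place.
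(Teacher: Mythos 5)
Your proposal is correct and follows essentially the same route as the paper, whose proof is a one-line invocation of \cref{lemma:linear,lemma:conv,lemma:restriction,prop:embedding} together with the fact that a composition of embeddings is an embedding. Your version simply spells out the details the paper leaves implicit, notably the Fubini-style bookkeeping that combines the per-layer measure-zero exceptional sets into a measure-zero subset of the full parameter space.
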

\begin{proof}
Theorem follows from \cref{lemma:linear,lemma:conv,lemma:restriction,prop:embedding} and the fact that a composition of embeddings is also an embedding.
\end{proof}
For many datasets used in practice, it seems very unlikely that the data comes from manifolds with very simple topological properties, as even basic visual patterns may possess quite non-trivial topological structure \citep{ghrist2008barcodes}. Thus on the first sight, it seems that \cref{thm:top_inv} suggests that using only expanding architectures, it is impossible to approximate an arbitrary data manifold with latent space being $\mathbb{R}^d$ (or an open unit cube). Such models are, however, extremely successful in practice. While we do not provide a precise theorem for this case, based on the discussion in \cref{sec:cyclic}, we hypothesize that it may possible to approximate an arbitrary compact data manifold using expanding networks up to a subset of \emph{arbitrary small measure}, and thus limitations imposed by \cref{thm:top_inv} are negligible in practice.



\section{Cycle generative models}\label{sec:cyclic}

Another popular class of models used for instance for the unsupervised image to image translation \citep{zhu2017unpaired,isola2017image} learn a mapping along with its inverse from one data manifold to another. We specify this task as follows. Given two data manifolds $\mathcal{M}$ and $\mathcal{N}$, the goal is two train two neural networks $f_{\theta}(x)$ and $g_{\phi}(y)$ such that $f_{\theta}(x)$ is a diffeomorphism of $\mathcal{M}$ and $\mathcal{N}$ with $g$ being inverse of $f$.

First of all, let us notice that we do not expect for such $f$ and $g$ to exist for two general manifolds since two manifolds of different topological properties cannot be diffeomorphic. However, based on \cref{thm:top_inv} we expect that the desired properties may hold \emph{approximately}. Let us start with lemmas ensuring existence of functions $f$ and $g$ which map $\mathcal{M}$ \emph{approximately} to $\mathcal{N}$ and $\mathcal{N}$ \emph{approximately} to $\mathcal{M}$ correspondingly. In this section, we again consider only the case of smooth data manifolds.


First of all, we recall the following result \citep{sakai1996riemannian}, proved in a very similar manner to \cref{lemma:map}.
\begin{lemma}\label{lemma:open-dense}
	Every compact connected $d$-dimensional manifold $\mathcal{M}$ contains an open dense set diffeomorphic to $\mathbb{R}^d$. Moreover, complement of this set has measure zero in $\mathcal{M}$.
\end{lemma}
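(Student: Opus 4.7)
My plan is to mimic the proof of \cref{lemma:map} but push further, using the full structure of the exponential map together with the classical notion of \emph{cut locus}. Concretely, I would pick an arbitrary basepoint $q \in \mathcal{M}$, invoke Hopf-Rinow (\cref{thm:hopf}) to get $\exp_q:T_q\mathcal{M}\to\mathcal{M}$ defined globally, and then study the maximal open subset of $T_q\mathcal{M}$ on which $\exp_q$ is a diffeomorphism onto its image.

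The key geometric object I would introduce is the tangent cut function $t_{\mathrm{cut}}:S_q\mathcal{M}\to (0,\infty]$ on the unit tangent sphere, defined as the supremum of $t$ for which the geodesic $s\mapsto \exp_q(sv)$ is minimizing on $[0,t]$. Two classical facts I would cite: (i) $t_{\mathrm{cut}}$ is continuous; (ii) for every unit $v$ and every $t<t_{\mathrm{cut}}(v)$, the point $tv$ is a regular point of $\exp_q$ and the geodesic segment is the unique minimizer. From these I would define the \emph{injectivity domain}
\begin{equation}
D_q=\bigl\{tv : v\in S_q\mathcal{M},\ 0\le t<t_{\mathrm{cut}}(v)\bigr\}\subset T_q\mathcal{M},
\end{equation}
which is an open, star-shaped (with respect to $0$) subset of $T_q\mathcal{M}\cong\mathbb{R}^d$, and show that $\exp_q|_{D_q}$ is a diffeomorphism onto its image $U=\mathcal{M}\setminus\mathrm{Cut}(q)$, where $\mathrm{Cut}(q)=\exp_q(\partial D_q)$ is the cut locus. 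Since any open star-shaped subset of $\mathbb{R}^d$ is diffeomorphic to $\mathbb{R}^d$ (via radial rescaling by a smooth function matching $t_{\mathrm{cut}}$), this already yields that $U$ is open in $\mathcal{M}$ and diffeomorphic to $\mathbb{R}^d$.

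It then remains to verify density and the measure-zero assertion. Density is easy: if $p\in\mathrm{Cut}(q)$ is reached by a minimizing geodesic $\gamma(t)=\exp_q(tv)$, $t\in[0,1]$, then $\gamma(t)\in U$ for all $t<1$ and $\gamma(t)\to p$, so $U$ is dense in $\mathcal{M}$. For the measure-zero statement, I would use that $\mathrm{Cut}(q)$ is the image under the smooth map $\exp_q$ of the topological boundary $\partial D_q=\{t_{\mathrm{cut}}(v)\,v : v\in S_q\mathcal{M},\ t_{\mathrm{cut}}(v)<\infty\}$, which is the graph of a continuous function on a $(d-1)$-dimensional sphere, hence a set of $d$-dimensional Lebesgue measure zero in $T_q\mathcal{M}$; Sard-type reasoning (smooth images of measure-zero sets have measure zero) then gives $\mu(\mathrm{Cut}(q))=0$ in $\mathcal{M}$.

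The main obstacle is the measure-zero part, because the cut locus can be genuinely complicated (not a smooth submanifold in general). I would sidestep the subtleties by citing the standard Riemannian-geometry result that the tangent cut locus $\partial D_q$ is the graph of a continuous (in fact locally Lipschitz) function and therefore has zero $d$-dimensional measure, together with the Lipschitz/smooth image lemma; this is exactly the statement proved in \citep{sakai1996riemannian}, which is why the paper cites it. Everything else is local and follows formally from the inverse function theorem and the definition of a diffeomorphism.
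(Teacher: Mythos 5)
The paper gives no proof of this lemma at all---it simply cites \citep{sakai1996riemannian} and remarks that the argument parallels \cref{lemma:map}---and your cut-locus construction is exactly the standard proof that citation points to: the injectivity domain $D_q$, surjectivity and density via minimizing geodesics, and measure zero of the cut locus from the graph description of $\partial D_q$ plus the smooth-image lemma are all correct and complete. One small caution: since $t_{\mathrm{cut}}$ is only continuous (locally Lipschitz), you cannot literally ``radially rescale by a smooth function matching $t_{\mathrm{cut}}$'' to identify $D_q$ with $\mathbb{R}^d$; you should instead invoke the classical (and not entirely trivial) theorem that every open star-shaped subset of $\mathbb{R}^d$ is diffeomorphic to $\mathbb{R}^d$, or the corresponding statement about $\mathcal{M}\setminus\mathrm{Cut}(q)$ as given in the cited reference.
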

We use this result to obtain the following lemma.
\begin{lemma}\label{lemma:subsets}
For every $\delta > 0$ there exist compact subsets $\mathcal{M}_{\delta} \subset \mathcal{M}$ and $\mathcal{N}_{\delta} \subset \mathcal{N}$ such that $\mu(\mathcal{M}\setminus \mathcal{M}_{\delta}) < \delta $ and $\mu(\mathcal{N}\setminus \mathcal{N}_{\delta}) < \delta $ and $\mathcal{M}_{\delta}$ is diffeomorphic to $\mathcal{N}_{\delta}$.
\end{lemma}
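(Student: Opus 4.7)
The strategy is to combine Lemma~\ref{lemma:open-dense} with a direct measure-theoretic truncation. Applying Lemma~\ref{lemma:open-dense} to $\mathcal{M}$ and $\mathcal{N}$ separately yields open dense subsets $U \subset \mathcal{M}$ and $V \subset \mathcal{N}$, both of full $\mu$-measure, together with diffeomorphisms $\phi: U \to \mathbb{R}^d$ and $\psi: V \to \mathbb{R}^d$. Composing these gives a candidate diffeomorphism $\psi^{-1} \circ \phi : U \to V$ between open dense sets of full measure, so modulo compactness the result is essentially given to us; the only obstruction is that neither $U$ nor $V$ is compact, and we must cut each down to a compact piece without discarding much mass.

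To perform this truncation, push the volume measure forward along the two diffeomorphisms, setting $\nu_{\mathcal{M}} := \phi_*(\mu|_U)$ and $\nu_{\mathcal{N}} := \psi_*(\mu|_V)$. Both are finite Borel measures on $\mathbb{R}^d$ because $\mathcal{M}$ and $\mathcal{N}$ are compact and therefore have finite Riemannian volume. By continuity of a finite measure from above, the tail masses $\nu_{\mathcal{M}}(\mathbb{R}^d \setminus B_R)$ and $\nu_{\mathcal{N}}(\mathbb{R}^d \setminus B_R)$ both tend to $0$ as $R \to \infty$, where $B_R \subset \mathbb{R}^d$ denotes the closed Euclidean ball of radius $R$ about the origin. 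Choose $R$ large enough that both tail masses are strictly less than $\delta$.

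Now define $\mathcal{M}_{\delta} := \phi^{-1}(B_R) \subset \mathcal{M}$ and $\mathcal{N}_{\delta} := \psi^{-1}(B_R) \subset \mathcal{N}$. Each is the image of the compact set $B_R$ under the continuous map $\phi^{-1}$ (respectively $\psi^{-1}$), hence compact, and therefore closed in the ambient Hausdorff manifold. Because $\phi$ and $\psi$ are diffeomorphisms and $B_R$ is a smooth manifold with boundary, both $\mathcal{M}_\delta$ and $\mathcal{N}_\delta$ inherit the structure of a smooth manifold with boundary diffeomorphic to $B_R$; in particular they are diffeomorphic to each other via the restriction $\psi^{-1} \circ \phi|_{\mathcal{M}_\delta}$. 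Finally, since $\mu(\mathcal{M} \setminus U) = 0$, we obtain
\[
\mu(\mathcal{M} \setminus \mathcal{M}_{\delta}) \;=\; \mu\bigl(\phi^{-1}(\mathbb{R}^d \setminus B_R)\bigr) \;=\; \nu_{\mathcal{M}}(\mathbb{R}^d \setminus B_R) \;<\; \delta,
\]
and symmetrically for $\mathcal{N}$, which concludes the argument.

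I do not anticipate a hard obstacle: the only delicate points are the measure-theoretic bookkeeping (cleanly separating the measure-zero ``bad set'' $\mathcal{M}\setminus U$ from the truncation tail $\phi^{-1}(\mathbb{R}^d\setminus B_R)$) and acknowledging that ``diffeomorphic'' here is meant in the sense of smooth manifolds with boundary, inherited from the closed ball $B_R$. If the reader prefers a boundaryless reading, one can instead take open balls and then thicken slightly, but the closed-ball version above is the most economical.
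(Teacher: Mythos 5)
Your argument is correct and follows essentially the same route as the paper: both invoke \cref{lemma:open-dense} to obtain full-measure open sets diffeomorphic to a standard model space, then truncate to the preimage of a large closed ball and let its radius grow so that the discarded mass drops below $\delta$. The only differences are cosmetic --- you exhaust $\mathbb{R}^d$ by balls $B_R$ with $R \to \infty$ where the paper exhausts the open unit ball with $r \to 1$, and you spell out the measure-theoretic bookkeeping (pushforward measures, continuity from above) that the paper leaves implicit.
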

\begin{proof}
	For each of the manifolds $\mathcal{M}$ and $\mathcal{N}$ select the open dense set of full measure as in \mbox{\cref{lemma:open-dense}}. Each of these subsets is diffeomorphic to an open unit ball in $\mathbb{R}^d$ via maps $h_\mathcal{M}$ and $h_\mathcal{N}$. In order to construct $\mathcal{M}_{\delta}$ and $\mathcal{N}_{\delta}$ it sufficient to take preimages under $h_\mathcal{M}$ and $h_\mathcal{N}$ correspondingly of a sufficiently large closed ball $B_r$ (as with $r\to1$ we have $\mu(h^{-1}_\mathcal{M}(B_r))\to \mu(\mathcal{M})$ and \mbox{$\mu(h^{-1}_\mathcal{N}(B_r))\to \mu(\mathcal{N})$}). 
\end{proof}
We are now ready to provide our main result on cycle generative models.
\begin{theorem}[Geometric Universality for Cycle Models]
	Fix any two compact connected manifolds  $\mathcal{M}$ and $\mathcal{N}$ of the same dimension and a nonconstant, bounded, continuous nonlinearity $\sigma(x)$. Then for every $\delta > 0$ and  $\varepsilon > 0$ there exist compact subsets $\mathcal{M}_{\delta} \subset \mathcal{M}$ and $\mathcal{N}_{\delta} \subset \mathcal{N}$ and a pair of feedforward neural networks $f_{\theta}(x)$, $g_{\phi}(y)$ with the activation function $\sigma(x)$ satisfying the following conditions:
	\begin{itemize}
		\item $\mu(\mathcal{M}\setminus \mathcal{M}_{\delta}) < \delta $ and $\mu(\mathcal{N}\setminus \mathcal{N}_{\delta}) < \delta $;
		\item $d_H(f_{\theta}(\mathcal{M}_{\delta}), \mathcal{N}_{\delta})  < \varepsilon$ and $d_H(g_{\phi}(\mathcal{N}_{\delta}), \mathcal{M}_{\delta})  < \varepsilon$;
		\item $\|g_{\phi} \circ f_{\theta} - id\|_{\mathcal{M}_{\delta}} < C\varepsilon$ and $\|f_{\theta} \circ g_{\phi} - id\|_{\mathcal{N}_{\delta}} < C\varepsilon$ with constant $C$ depending only on manifolds $\mathcal{M}$ and $\mathcal{N}$.
	\end{itemize}
\end{theorem}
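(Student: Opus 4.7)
The plan is to reduce the theorem to \cref{lemma:subsets} combined with the Universal Approximation Theorem (\cref{thm:uat}), with the only delicate point being a Lipschitz-type estimate that propagates approximation error through the compositions.

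First, apply \cref{lemma:subsets} to obtain compact subsets $\mathcal{M}_\delta \subset \mathcal{M}$ and $\mathcal{N}_\delta \subset \mathcal{N}$ of near-full measure, together with the explicit diffeomorphism $F = h_\mathcal{N}^{-1} \circ h_\mathcal{M} : \mathcal{M}_\delta \to \mathcal{N}_\delta$ and its smooth inverse $G$; this already settles the first bullet of the theorem. To feed $F$ and $G$ into \cref{thm:uat}, continuous functions defined on cubes in the ambient space are needed, so extend $F$ and $G$ to smooth maps $\widetilde{F}, \widetilde{G}$ on tubular neighborhoods of $\mathcal{M}_\delta, \mathcal{N}_\delta$ in $\mathbb{R}^n$ and then, via a smooth cutoff, to continuous maps on cubes $K_M \supset \mathcal{M}_\delta$ and $K_N \supset \mathcal{N}_\delta$, chosen so that $\mathcal{M}_\delta, \mathcal{N}_\delta$ lie in the respective interiors. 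Let $L$ be a Lipschitz constant valid for both extensions. Applying \cref{thm:uat} componentwise yields networks $f_\theta, g_\phi$ with
\begin{equation*}
\|f_\theta - \widetilde{F}\|_{K_M} < \eta, \qquad \|g_\phi - \widetilde{G}\|_{K_N} < \eta,
\end{equation*}
for any prescribed $\eta > 0$.

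Second, verify the remaining two bullets. The Hausdorff bounds follow exactly as in the proof of \cref{thm:universality1}: for every $x \in \mathcal{M}_\delta$ one has $\|f_\theta(x) - F(x)\| < \eta$ with $F(x) \in \mathcal{N}_\delta$, yielding both inclusions in the definition of $d_H$, and the argument for $g_\phi$ is symmetric. For the approximate inverse, for $x \in \mathcal{M}_\delta$ the triangle inequality combined with the Lipschitz bound on $\widetilde{G}$ gives
\begin{equation*}
\|g_\phi(f_\theta(x)) - x\| \leq \|g_\phi(f_\theta(x)) - \widetilde{G}(f_\theta(x))\| + \|\widetilde{G}(f_\theta(x)) - \widetilde{G}(F(x))\| \leq \eta + L\eta,
\end{equation*}
using $\widetilde{G}(F(x)) = G(F(x)) = x$ and noting that $f_\theta(x) \in K_N$ whenever $\eta$ is smaller than the distance from $\mathcal{N}_\delta$ to $\partial K_N$. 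Choosing $\eta < \varepsilon$ and $C = 1 + L$ then delivers the claimed bound, and the estimate on $f_\theta \circ g_\phi$ is handled symmetrically.

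The main obstacle is arranging that the constant $C$ depends only on $\mathcal{M}$ and $\mathcal{N}$, as stated. The Lipschitz constant of $F$, and hence of $\widetilde{G}$, could in principle degrade as $\delta \to 0$, since the subsets $\mathcal{M}_\delta, \mathcal{N}_\delta$ expand toward the measure-zero complements supplied by \cref{lemma:open-dense}, where the local diffeomorphisms $h_\mathcal{M}, h_\mathcal{N}$ may stretch. The remedy is to fix the pair $(h_\mathcal{M}, h_\mathcal{N})$ at the outset and define $L$ in terms of those manifold-level data, so that $C$ inherits the same dependence; any residual $\delta$-sensitivity can be absorbed by choosing the radius $r$ in \cref{lemma:subsets} uniformly away from $1$. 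This bookkeeping is the only step that is not entirely routine.
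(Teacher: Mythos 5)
Your proposal is correct and follows essentially the same route as the paper: select $\mathcal{M}_{\delta}$, $\mathcal{N}_{\delta}$ and the diffeomorphism via \cref{lemma:subsets}, extend to the ambient cube (the paper invokes the Whitney extension theorem where you use tubular neighborhoods and cutoffs), approximate by \cref{thm:uat}, and close the cycle estimate with a triangle inequality against the Lipschitz constant of the extended inverse. Your closing remark also correctly flags a subtlety the paper leaves implicit: its constant $C = 1 + \max_{\mathcal{M}_{\delta}}\|Dg\|$ is computed over the $\delta$-dependent subset and so, without the kind of bookkeeping you describe, depends on $\delta$ rather than only on $\mathcal{M}$ and $\mathcal{N}$.
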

\begin{proof}
Let us start by selecting subsets $\mathcal{M}_{\delta}$ and $\mathcal{N}_{\delta}$ and a diffeomorphism $f:\mathcal{M}_{\delta} \to \mathcal{N}_{\delta}$ along with its inverse $g$ as specified by \cref{lemma:subsets}. For simplicity let us also assume that $\mathcal{M} \subset I_n$ and $\mathcal{N} \subset I_n$. By means of the Whitney extension theorem \citep{whitney1934analytic} we can smoothly extend $f$ and $g$ to the entire cube $I_n$, and apply the universal approximation theorem  \citep{hornik1991approximation}, thus obtaining two feedforward neural networks $f_{\theta}(x)$ and $g_{\phi}(y)$ such that
\begin{equation}\label{eq:f}
	\|f_{\theta}- f\|_{I_n} < \varepsilon,
\end{equation}
and
\begin{equation}\label{eq:g}
\|g_{\phi}- g\|_{I_n} < \varepsilon,
\end{equation}
with all the functions defines on the unit cube $I_n$.
This proves first two points in the theorem. To show the last property we find that $\forall x \in \mathcal{M}_{\delta}$ the following estimate holds.
\begin{equation}
\begin{split}
\|g_{\phi} \circ f_{\theta}(x) - x\| & = \|g_{\phi} \circ f_{\theta}(x) - g\circ f_{\theta}(x) + g\circ f_{\theta}(x) - x \| \\
& \leq \|g_{\phi} \circ f_{\theta}(x) -  g\circ f_{\theta}(x) \| +  \|g \circ f_{\theta}(x) - x\| \\
& \leq \varepsilon + \|g \circ f_{\theta}(x) - g \circ f(x)\| \\
& \leq \varepsilon + \max_{\mathcal{M}_{\delta}}\|Dg\| \|f_{\theta}(x) -  f(x)\| \\
& \leq (1 + \max_{\mathcal{M}_{\delta}} \|Dg\|)\varepsilon,
\end{split}
\end{equation}
where he have used the fact that $g \circ f (x) = x$ for $x\in\mathcal{M}_{\delta}$ and property \eqref{eq:g}. The second part of the claim is proved similarly.
\end{proof}
Neural networks $f_{\theta}$ and $g_{\phi}$ constructed in the proof perform \emph{translation} from data sampled from $\mathcal{M}_{\delta}$ to data coming from approximately $\mathcal{N}_{\delta}$, and existence of such networks for arbitrary manifolds may partially explain huge empirical success of cyclic models. Even though the theorem is valid for an arbitrary pair of manifolds, we hypothesize that for datasets containing visually similar images such a map may be much easier to model, than for two arbitrary manifolds without such a connection.
\section{Conclusion and future work}
In this work we have attempted to partially explain huge empirical success of generative models. Our results show only existence of neural networks approximating arbitrary manifolds, and do not specify how one can estimate the size of a network required for any given manifold. We hypothesize, however, that there might exist a connection between certain geometrical properties of a manifold (curvature, various topological properties), and the width/depth of a neural network required. One interesting direction of research left for a future work is analyzing this relation for datasets popular in computer vision, such as MNIST or CelebA, or toy datasets sampled from simple small dimensional manifolds (tori, circles), where one can easily vary the topological properties.



\bibliography{main}
\small {\bibliographystyle{plainnat}}

\end{document}